 \newtheorem{theorem}{Theorem}[section]
\newtheorem{example}{Example}[section]
\newtheorem{definition}{Definition}
\newtheorem{remark}{Remark}
\journal{Physics A}
\begin{document}

\begin{frontmatter}

\title{A Visibility Graph Averaging Aggregation Operator}

\author[label1]{Shiyu Chen}
\author[label2]{Yong Hu}
\author[label3]{Sankaran~Mahadevan}
\author[label1,label3]{Yong Deng\corref{label4}}
\cortext[label4]{Corresponding author: Yong Deng, School of Computer and Information Science, Southwest University, Chongqing, 400715, China. Email address: ydeng@swu.edu.cn; prof.deng@hotmail.com. Tel.: +86 23 6825 4555, Fax: +86 23 6825 4555}

\address[label1]{School of Computer and Information Science, Southwest University, Chongqing, 400715, China.}
\address[label2]{Institute of Business Intelligence and Knowledge Discovery, Guangdong University of Foreign Studies, Guangzhou 510006, China}
\address[label3]{School of Engineering, Vanderbilt University, Nashville, TN, 37235, USA.}

\begin{abstract}
The problem of aggregation is considerable importance in many disciplines. In this paper, a new type of operator called visibility graph averaging (VGA) aggregation operator is proposed. This proposed operator is based on the visibility graph which can convert a time series into a graph. The weights are obtained according to the importance of the data in the visibility graph. Finally, the VGA operator is used in the analysis of the TAIEX database to illustrate that it is practical and compared with the classic aggregation operators, it shows its advantage that it not only implements the aggregation of the data purely, but also conserves the time information, and meanwhile, the determination of the weights is more reasonable.

\end{abstract}

\begin{keyword}
The visibility graph \sep Aggregation operator \sep The ordered weighted averaging (OWA) aggregation \sep Forecasting

\end{keyword}

\end{frontmatter}

\section{Introduction}
\label{Introduction}
Aggregation is a process of combining several numerical values into a single one which exists in many disciplines, such as image processing \cite{soria2001new,liu2013color}, pattern recognition \cite{soda2009aggregation,russo1999fire}, decision making \cite{merigo2013induced,merigo2011decision,merigo2011induced} and so forth \cite{marek1996iterative,liu2010owa,cheng2013owa,smutna2004graded,wei2011prediction,wang2011discrete}. To obtain a consensus quantifiable judgements, some synthesizing functions have been proposed.

For example, arithmetic mean, geometric mean, median can be regarded as a basic class, because they are often used and very classical. However, these operators is not able to model an interaction between criteria. For having a representation of interaction phenomena between criteria, Fuzzy measures have been proposed by Sugeno in 1974 \cite{sugeno1974theory}. Two main classes of the fuzzy measures are Choquet and Sugeno integrals. Choquet and Sugeno integrals are idempotent, continuous and monotone operators. The ordered weighted average (OWA) operators are a particular case of discrete Choquet integrals. The OWA operators were introduced by Yager in \cite{yager1988ordered} to provide an aggregation which lies in between the ``and" and the ``or " operators. The ``and" (t-norms) and the ``or" (t-conorms) operators are generalizations of the logical aggregation operators which are two specialized aggregation families. Above operators try to look for giving a ``middle value", but the t-norms and the t-conorms can compute the intersection and union of fuzzy sets.

However, to the best of our knowledge, these operators do not consider the influence of time specially and the time factor should not be ignored in some areas such as economics, space science, weather forecast and so forth. In this paper, a novel aggregation operator called visibility graph averaging (VGA) aggregation operator is proposed which can deal with the time series effectively.

This paper is inspired by the pioneering work the visibility graph \cite{lacasa2008time} which builts a natural bridge between complex network theory and time series. In the visibility graph, the values of a time series are plotted by using vertical bars. These vertical bars are regarded as landscapes, and every bar is linked with others that can be seen from the top of the considered one, then the associated graph is obtained. According to the study, it is found that the structure of the time series is conserved in the graph topology. For example, periodic series convert into regular graphs, random series convert into random graphs, and fractal series convert into scale-free graphs. Until now, the visibility graph has been applied in economic \cite{wang2012visibility}, geology \cite{telesca2013investigating,telesca2012analysis}, praxiology \cite{fan2012fractal}, biological system \cite{ahmadlou2010new,ahmadlou2012improved} and so forth \cite{yang2009visibility,gutin2011characterization,liu2010statistical,lacasa2009visibility,qian2010universal,donner2012visibility}. The proposed visibility graph averaging (VGA) operator is based on the visibility graph, hence, it conserves the time information likewise.

In some aggregation operator, how to decide the weight of each argument is a problem \cite{xu2005overview,filev1998issue,beliakov2005learning}, but in this proposed VGA operator, while the time series is converted into graphs, the degree distribution is decided, and meanwhile the weights are decided. In general, if the degree of a node is bigger than others, this node will be more important, and in the visibility graph a node represents a data value of time series, so it offers a reasonable way to determine the weights of the corresponding data value.

The remainder of this paper is organized as follows. Section \ref{Preliminaries} briefly introduces some necessary preliminaries of the aggregation operators and the graph theory. Section \ref{Visibility Graph Averaging Operator} details the proposed visibility graph averaging (VGA) aggregation operator. The properties of the visibility graph averaging (VGA) aggregation operator will be discussed in the Section \ref{properties}. In Section \ref{Case}, VGA aggregation operator is applied in the economic and compared with OWA operators to show its advantage. Finally, some conclusions are given in Section \ref{conclusion}.

\section{Preliminaries}
\label{Preliminaries}
In this section, the aggregation operators problem and the graph theory are briefly introduced.
\subsection{The aggregation operator}
Aggregating values, a new value can be obtained, but this can be done in different ways. In other words, aggregation operators is various. In the following, the aggregation operator will be introduced in a formal way.

Let $I$ be the unit intervals. ${A_j}(x) \in I$ denotes the degree to which $x$ satisfies the criteria $A_j$, $R(x) \in I$ denotes the set of the results of the aggregation.

\begin{definition}
An aggregation operator is a function $Agg$:
\[R(x) = Agg({A_1}(x),{A_2}(x), \cdots {A_n}(x))\]
where $n$ represents the number of values to be aggregated.
\end{definition}

Several fundamental conditions have been proposed to define the aggregation operators \cite{mayor1986representation}. The fundamental properties which generalize most of the precedent definitions are as follows \cite{detyniecki2001fundamentals}:

1)\textbf{Identity when unary}: If there is only one value needing to be aggregated, the result is itself. \[Agg({A_j}(x)) = {A_j}(x)\]

2)\textbf{Boundary conditions}: If all the values needing to be aggregated are completely bad, false or not satisfactory, the result has to be completely bad, false or not satisfactory. On the contrary, if all the values needing to be aggregated are completely good, true or satisfactory then the result has to be completely good, true or satisfactory.
\[Agg(0,0, \cdots ,0) = 0\]
\[Agg(1,1, \cdots ,1) = 1\]

3)\textbf{Monotonicity}: If the individual value increases the overall satisfaction should increase:
\[Agg({A_j}(x)) \ge Agg({A_j}(y)),{\kern 1pt} {\kern 1pt} {\kern 1pt} {\kern 1pt} {\kern 1pt} {\kern 1pt} {\kern 1pt} {\kern 1pt} {\kern 1pt} {\kern 1pt} {\kern 1pt} {\kern 1pt} {\kern 1pt} {\kern 1pt} if{\kern 1pt} {\kern 1pt} {\kern 1pt} {\kern 1pt} {A_j}(x) \ge {A_j}(y)\]

\subsection{The graph theory}
Graph theory is the study of graphs, which is made up of vertices and edges. Graphs can be used to deal with many types of relations and processes in computer science \cite{narsingh2004graph}, biological \cite{pavlopoulos2011using,haggarty2003chemical}, social \cite{nastos2013familial} and so forth \cite{kim2013predicting,agosta2013brain,liu2012description}. Recently, with the development of complex network research, the graph theory is widely used in the analysis of complex network \cite{barabasi1999emergence,gao2013modified,wei2013identifying}.

\begin{definition}
A graph is formed by vertices and edges connecting the vertices.
\end{definition}

\begin{example}
A graph is a pair of sets (V,E), where V denotes the set of vertices and E denotes the set of edges.
Figure \ref{graph} shows a graph with 5 vertices and 5 edges. The vertices are labeled as ${v_1},{v_2}, \cdots {v_5}$ and the edges are labeled as ${e_1},{e_2}, \cdots {e_5}$.
\end{example}

\begin{definition}
The degree $d(v)$ of a vertex $v$ is the number of edges at $v$.
\end{definition}

\begin{example}
In the Figure \ref{graph}, $d({v_1}) = 5,d({v_2}) = 2,d({v_3}) = 2,d({v_4}) = 1,d({v_5}) = 0$, the sum of the degree is $5+2+2+1=10$.
\end{example}

\begin{figure}[!t]
\centering
\includegraphics[scale=0.8]{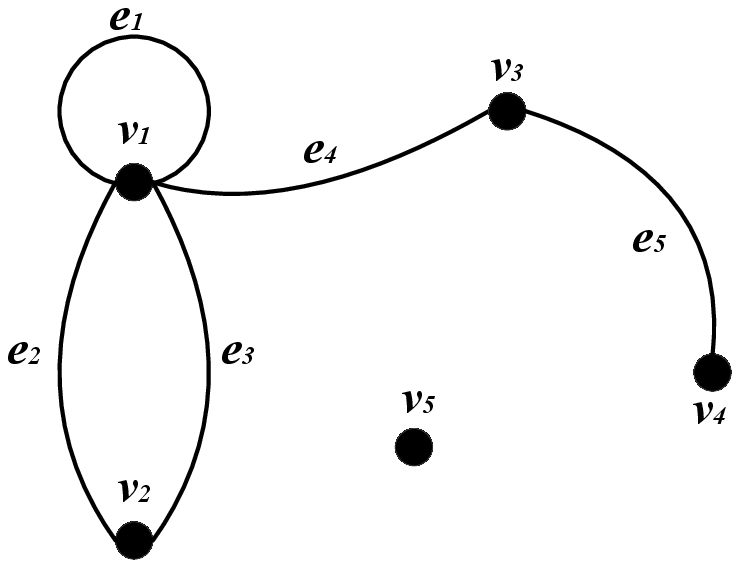}
\caption{The graph with 5 vertices and 5 edges}
\label{graph}
\end{figure}

In general, if a vertex has a big degree, it illustrates that this vertex links with many other vertices and it will be important.

\subsection{The visibility graph}
The visibility graph method was first proposed by L. Lacasa et al. in 2008, which can convert a time series into a graph. The properties of the time series is conserved in the graph topology. In the visibility graph, the values of time series are plotted by using vertical bars. A vertical bar links with others which can be seen from the top of itself.  The visibility criteria is established in the literature \cite{lacasa2008time} as follows:

\begin{definition}
Two data value $({t_1},{y_1})$ and $({t_2},{y_2})$ have visibility, if any other value $({t_3},{y_3})$ is placed between them fulfills:
\[{y_3} < {y_2} + ({y_1} - {y_2})\frac{{{t_2} - {t_3}}}{{{t_2} - {t_1}}}\]
\end{definition}

\begin{example}
In the Figure \ref{visibilityexample}, the histogram shows a time series with 11 data values, and according to the visibility algorithm, the associated graph is obtained. In the histogram, if a bar can be seen from the top of considered one, they will be linked. If two bar are linked in the histogram, the vertices which represent them will be linked in the associated graph.
\end{example}

\begin{figure}[!t]
\centering
\includegraphics[scale=0.5]{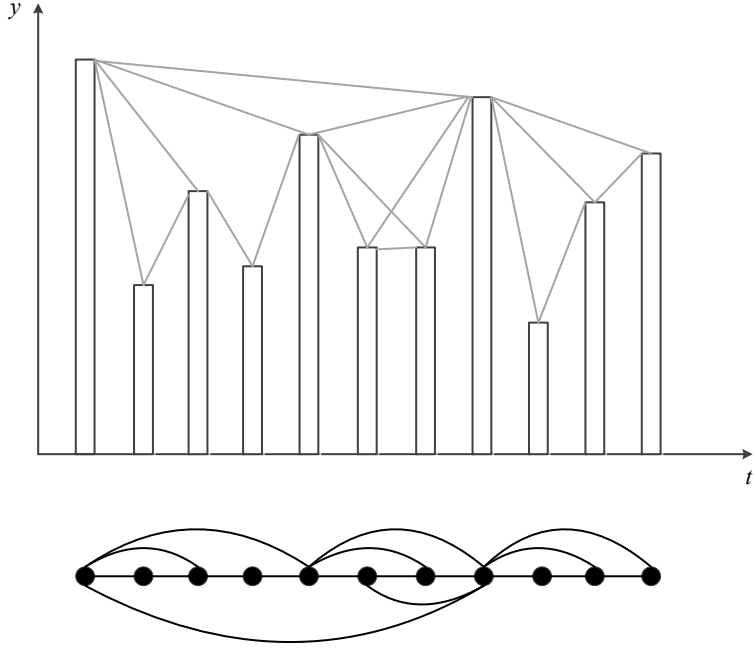}
\caption{The visibility graph}
\label{visibilityexample}
\end{figure}

The associated graph derived from a time series has the following properties:
1)\textbf{Connected}: a node can see its nearest neighbors.
2)\textbf{Undirected}: the associated graph extracted from a time series is undirected.
3)\textbf{Invariant under affine transformations of the series data}: Although rescale both horizontal and vertical axes, the visibility criterion is invariant.

\section{Visibility Graph Averaging Operator}
\label{Visibility Graph Averaging Operator}
As mentioned above, there is rarely aggregation operator considers the influence of time, but in many disciplines, time is a very important factor that can not be ignored. In this section, a new type of operator called visibility graph averaging (VGA) operator will be introduced which can be used to aggregate time series. This proposed VGA operator will be introduced in the following.

To a time series, if there are $i$ data values, the associated graph derived from the visibility algorithm will have $i$ vertices. VGA operator is defined as follows:
\begin{definition}
VGA operator is a mapping F:
\[{I^n} \to I,I \in R\]
where
\begin{equation}
\label{VGA}
F({a_1},{a_2}, \cdots {a_n}) = {w_1}{a_1} + {w_2}{a_2} +  \cdots  + {w_n}{a_n}
\end{equation}
and where $a_i$ is the $ith$ value in the time series, and $w_i$ is the weight of the value $a_i$ satisfies:

\[{w_i} \in [0,1]\]

\[\sum\limits_i^n {{w_i}}  = 1\]
\end{definition}

Because the degree distribution can reflect the importance of a vertex.
The weight $w_i$ can be obtained according to the degree distribution in the visibility graph which is defined as follows:
\begin{definition}
\begin{equation}
\label{weight}
{w_i} = \frac{{{d_i}}}{{\sum\limits_{i = 1}^n {{d_i}} }}
\end{equation}
where $d_i$ is the value of the degree at the vertex $i$.
\end{definition}
It is necessary to notice that if there is only one value in the time series, in the associated visibility graph, there just exist one vertex and the degree of this vertex is equal to zero. Hence, the weight is revised as one, then we have:
\begin{definition} If there is only one value in the time series, the VGA operator is as:
\[F({a_i}) = {a_i}\]
\end{definition}
The following simple example illustrates the use of this VGA operator.
\begin{example}
\label{example3.1}
Assume there is a time series which is shown in the Table \ref{example1}, and the associated visibility graph is shown in the Figure \ref{visibilityexample3.1}.
\end{example}

\begin{table}[htbp]
    \caption{Example of a time series(8 data values)}
    \label{example1}
    \begin{center}
    \begin{tabular}{cc|cc} \hline
    $T$ & $V$ & $T$ & $V$   \\ \hline
    $t_1$ & $40$ & $t_5$ & $85$  \\
    $t_2$ & $45$ & $t_6$ & $55$ \\
    $t_3$ & $70$ & $t_7$ & $70$  \\
    $t_4$ & $50$ & $t_8$ & $75$  \\ \hline
    \end{tabular}
    \end{center}
\end{table}

\begin{figure}[!t]
\centering
\includegraphics[scale=0.9]{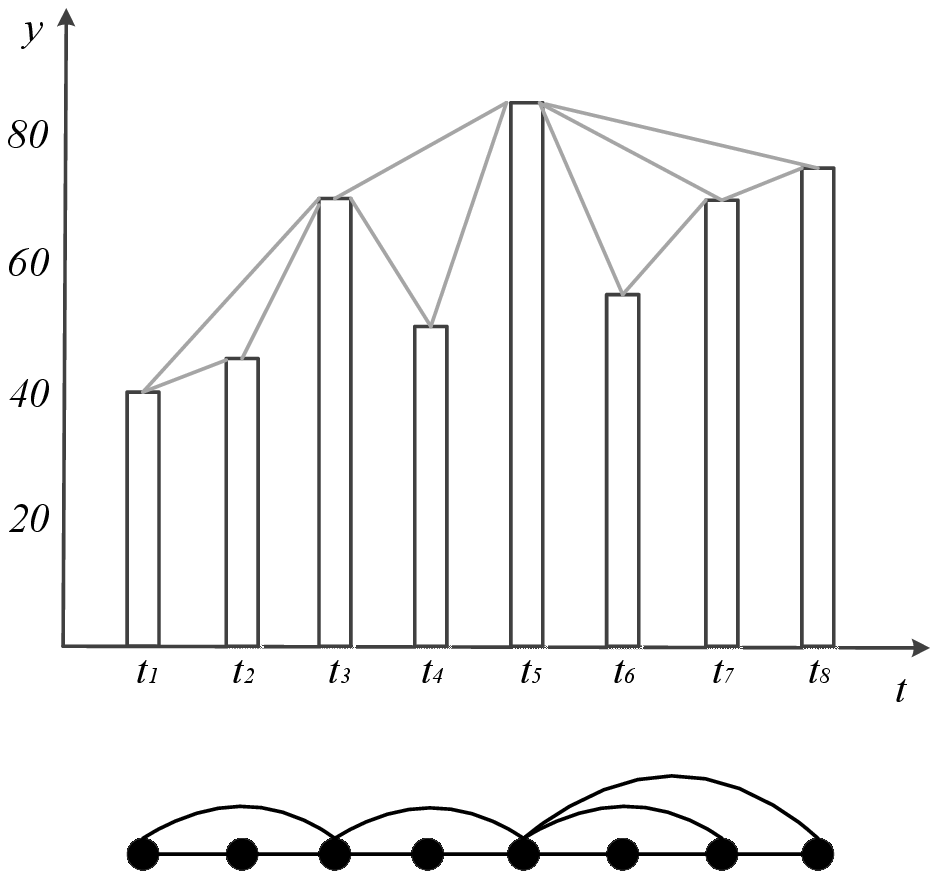}
\caption{The visibility graph of Example \ref{example3.1}}
\label{visibilityexample3.1}
\end{figure}

According to the Eq.\ref{weight}, the weights of the values can be computed as follows:
\[{w_1} = \frac{{{d_1}}}{{{d_1} + {d_2} +  \cdots  + {d_8}}}{\kern 1pt}  = \frac{2}{{2 + 2 + 4 + 2 + 5 + 2 + 3 + 2}} = \frac{2}{{22}} = \frac{1}{{11}};{\kern 1pt} {\kern 1pt} {\kern 1pt} {\kern 1pt} {\kern 1pt} {\kern 1pt} {\kern 1pt} {\kern 1pt} {\kern 1pt} {\kern 1pt} \]

%${w_1} = \frac{2}{{2 + 2 + 4 + 2 + 5 + 2 + 3 + 2}} = \frac{2}{{22}} = \frac{1}{{11}}$;
%${w_2} = \frac{2}{{2 + 2 + 4 + 2 + 5 + 2 + 3 + 2}} = \frac{2}{{22}} = \frac{1}{{11}}$;
%
%${w_3} = \frac{4}{{2 + 2 + 4 + 2 + 5 + 2 + 3 + 2}} = \frac{4}{{22}} = \frac{2}{{11}}$;
%${w_4} = \frac{2}{{2 + 2 + 4 + 2 + 5 + 2 + 3 + 2}} = \frac{2}{{22}} = \frac{1}{{11}}$;
%
%${w_5} = \frac{5}{{2 + 2 + 4 + 2 + 5 + 2 + 3 + 2}} = \frac{5}{{22}}$;
%${w_6} = \frac{2}{{2 + 2 + 4 + 2 + 5 + 2 + 3 + 2}} = \frac{2}{{22}} = \frac{1}{{11}}$;
%
%${w_7} = \frac{3}{{2 + 2 + 4 + 2 + 5 + 2 + 3 + 2}} = \frac{3}{{22}}$;
%${w_8} = \frac{2}{{2 + 2 + 4 + 2 + 5 + 2 + 3 + 2}} = \frac{2}{{22}} = \frac{1}{{11}}$.

So according to the Eq.\ref{VGA}, we have
\[\begin{array}{l}
 F(40,45,70,50,85,55,70,75) \\
 {\kern 1pt}  = \frac{1}{{11}} \times 40 + \frac{1}{{11}} \times 45 + \frac{2}{{11}} \times 70 + \frac{1}{{11}} \times 50 + \frac{5}{{22}} \times 85 + \frac{1}{{11}} \times 55 + \frac{3}{{22}} \times 70 + \frac{1}{{11}} \times 75 \\
 {\kern 1pt}  = 65.68 \\
 \end{array}\]

The weights of the VGA operator are obtained according to the distribution of the degree in the visibility graph, because in the network, if a node links with more nodes than others, its degree will bigger and it will be more important than others. In the histogram, Whether the considered one can see others is relevant to the order of the data. The order in our VGA operator is decided by time. The order of the data is not set artificially, and instead it is restricted by time. In other words, the time information is conserved in the VGA operator.

\section{Properties of VGA Operator}
\label{properties}
In this section, we shall investigate some properties of this new operator. These properties will be divided into two parts: the mathematical properties and the time properties.

\subsection{Mathematical property}

\begin{theorem}
Idempotency: if $a_i=a$,for all $i = 1,2, \cdots ,n$, then
\[F({a_1},{a_2}, \cdots ,{a_n}) = a\]
\end{theorem}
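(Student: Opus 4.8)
The plan is to prove idempotency directly from the definition of the VGA operator, relying essentially only on the normalization of the weights. First I would substitute the hypothesis $a_i = a$ for every $i = 1, 2, \ldots, n$ into Eq.~\ref{VGA}, obtaining
\[
F(a, a, \ldots, a) = w_1 a + w_2 a + \cdots + w_n a.
\]
Since the common value $a$ appears as a factor in every term, I would factor it out and rewrite the right-hand side as $a\left(\sum_{i=1}^{n} w_i\right)$.

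The key step is then to invoke the normalization constraint $\sum_{i=1}^{n} w_i = 1$ that is built into the definition of the VGA operator. Because the weights $w_i = d_i / \sum_{j=1}^{n} d_j$ in Eq.~\ref{weight} are explicitly normalized, their sum equals exactly one regardless of the particular degree sequence of the visibility graph. Hence the expression collapses to $a \cdot 1 = a$, which is the desired conclusion. Notably, the argument does not require the weights to be equal: idempotency follows solely from the fact that they form a convex combination.

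The only point requiring a brief check is that the weights are well-defined under the hypothesis, i.e.\ that the denominator $\sum_{j} d_j$ does not vanish. When all data values are equal, the visibility criterion is satisfied only by consecutive bars, since for two bars of common height $a$ the intermediate-bar inequality reduces to $y_3 < a$, which can never hold when the intermediate bar also has height $a$. The associated graph is therefore a simple path $v_1 - v_2 - \cdots - v_n$, whose degree sum equals $2(n-1)$, so each $w_i$ is well-defined for $n \ge 2$ and the computation above goes through; the degenerate case $n = 1$ is already covered separately by the convention $F(a_i) = a_i$. I do not expect any substantive obstacle here: the result is a one-line consequence of normalization, and the remark about the path structure serves only to confirm that the weights remain well-defined in this special configuration.
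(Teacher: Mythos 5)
Your proof is correct and takes essentially the same route as the paper's: substitute $a_i = a$, factor out $a$, and invoke the normalization $\sum_{i=1}^{n} w_i = 1$ to conclude $F(a,a,\cdots,a) = a$. Your extra verification that the visibility graph of a constant series is a path (so the degree sum $2(n-1)$ is nonzero and the weights are well-defined for $n \ge 2$, with $n=1$ handled by the paper's separate convention) is a sound detail the paper silently omits, but it does not change the approach.
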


\begin{proof}
\[\begin{array}{l}
 F({a_1},{a_2}, \cdots ,{a_n}) = {w_1}a + {w_2}a +  \cdots  + {w_n}a \\
 {\kern 1pt} {\kern 1pt} {\kern 1pt} {\kern 1pt} {\kern 1pt} {\kern 1pt} {\kern 1pt} {\kern 1pt} {\kern 1pt} {\kern 1pt} {\kern 1pt} {\kern 1pt} {\kern 1pt} {\kern 1pt} {\kern 1pt} {\kern 1pt} {\kern 1pt} {\kern 1pt} {\kern 1pt} {\kern 1pt} {\kern 1pt} {\kern 1pt} {\kern 1pt} {\kern 1pt} {\kern 1pt} {\kern 1pt} {\kern 1pt} {\kern 1pt} {\kern 1pt} {\kern 1pt} {\kern 1pt} {\kern 1pt} {\kern 1pt} {\kern 1pt} {\kern 1pt} {\kern 1pt} {\kern 1pt} {\kern 1pt} {\kern 1pt} {\kern 1pt} {\kern 1pt} {\kern 1pt} {\kern 1pt} {\kern 1pt} {\kern 1pt} {\kern 1pt} {\kern 1pt} {\kern 1pt} {\kern 1pt} {\kern 1pt} {\kern 1pt} {\kern 1pt} {\kern 1pt} {\kern 1pt} {\kern 1pt} {\kern 1pt} {\kern 1pt} {\kern 1pt} {\kern 1pt} {\kern 1pt} {\kern 1pt} {\kern 1pt} {\kern 1pt} {\kern 1pt} {\kern 1pt} {\kern 1pt} {\kern 1pt} {\kern 1pt} {\kern 1pt} {\kern 1pt} {\kern 1pt} {\kern 1pt} {\kern 1pt} {\kern 1pt} {\kern 1pt} {\kern 1pt} {\kern 1pt}  = ({w_1} + {w_2} +  \cdots {w_n})a \\
 {\kern 1pt} {\kern 1pt} {\kern 1pt} {\kern 1pt} {\kern 1pt} {\kern 1pt} {\kern 1pt} {\kern 1pt} {\kern 1pt} {\kern 1pt} {\kern 1pt} {\kern 1pt} {\kern 1pt} {\kern 1pt} {\kern 1pt} {\kern 1pt} {\kern 1pt} {\kern 1pt} {\kern 1pt} {\kern 1pt} {\kern 1pt} {\kern 1pt} {\kern 1pt} {\kern 1pt} {\kern 1pt} {\kern 1pt} {\kern 1pt} {\kern 1pt} {\kern 1pt} {\kern 1pt} {\kern 1pt} {\kern 1pt} {\kern 1pt} {\kern 1pt} {\kern 1pt} {\kern 1pt} {\kern 1pt} {\kern 1pt} {\kern 1pt} {\kern 1pt} {\kern 1pt} {\kern 1pt} {\kern 1pt} {\kern 1pt} {\kern 1pt} {\kern 1pt} {\kern 1pt} {\kern 1pt} {\kern 1pt} {\kern 1pt} {\kern 1pt} {\kern 1pt} {\kern 1pt} {\kern 1pt} {\kern 1pt} {\kern 1pt} {\kern 1pt} {\kern 1pt} {\kern 1pt} {\kern 1pt} {\kern 1pt} {\kern 1pt} {\kern 1pt} {\kern 1pt} {\kern 1pt} {\kern 1pt} {\kern 1pt} {\kern 1pt} {\kern 1pt} {\kern 1pt} {\kern 1pt} {\kern 1pt} {\kern 1pt} {\kern 1pt} {\kern 1pt} {\kern 1pt} {\kern 1pt}  = 1 \times a = a \\
 \end{array}\]
\end{proof}

\begin{remark}
Particularly, there exist
\[F(0,0, \cdots ,0) = 0\]
\[F(1,1, \cdots ,1) = 1\]
The above equations illustrate that if every single value is small, the aggregated result will be small, and if every value is big, the aggregated will be big.
\end{remark}

\begin{theorem}
Stability for a linear function:
\[F(r{a_1} + t,r{a_2} + t, \cdots ,r{a_n} + t) = r \cdot F({a_1},{a_2}, \cdots ,{a_n}) + t\]
Where $r,t \in R$
\end{theorem}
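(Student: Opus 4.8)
The plan is to exploit the linearity of the VGA operator together with the affine invariance of the visibility graph. Writing $F(ra_1+t,\ldots,ra_n+t) = w_1'(ra_1+t)+\cdots+w_n'(ra_n+t)$, where $w_i'$ denotes the weight computed from the transformed series, the result will follow once I establish that the weights are unchanged by the transformation, i.e.\ $w_i' = w_i$ for every $i$, and then collect terms using the normalization $\sum_i w_i = 1$.

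First I would argue that each $w_i$ is invariant under the map $a_i \mapsto ra_i + t$. Since the weights are defined by Eq.~(\ref{weight}) purely in terms of the vertex degrees $d_i$ of the visibility graph, it suffices to show that the graph built from the transformed series has exactly the same edge set, and hence the same degree sequence, as the graph built from the original series. This is precisely property 3) of the visibility graph (invariance under affine transformations of the series data): substituting $y_k \mapsto ry_k + t$ into the visibility criterion gives
\[ry_3 + t < (ry_2+t) + \big((ry_1+t)-(ry_2+t)\big)\frac{t_2-t_3}{t_2-t_1},\]
whereupon the additive constant $t$ cancels from both sides and a common factor $r$ can be divided out, returning the original criterion. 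Thus every pair of data points that had visibility still has visibility, the adjacency structure is preserved, each degree $d_i$ is unchanged, and therefore $w_i' = w_i$.

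With the weights fixed, the remainder is a short computation: expand
\[F(ra_1+t,\ldots,ra_n+t) = \sum\limits_{i=1}^n w_i(ra_i+t) = r\sum\limits_{i=1}^n w_i a_i + t\sum\limits_{i=1}^n w_i,\]
and apply $\sum_i w_i = 1$ to conclude that this equals $r\cdot F(a_1,\ldots,a_n) + t$.

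The step I expect to be the main obstacle is the invariance of the degrees, since everything hinges on the visibility graph being genuinely unchanged; the delicate point is the sign of $r$. Dividing the transformed inequality by $r$ preserves its direction only when $r>0$. For $r<0$ the bar heights are inverted, the visibility relation is reversed, and in general the degree sequence---and hence the weights---would change, so the identity can fail. I would therefore treat the $r>0$ case as the substantive content and flag this hypothesis explicitly, noting that $r=0$ reduces to $F(t,\ldots,t)=t$, which is already covered by the idempotency theorem.
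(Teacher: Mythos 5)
Your proposal follows essentially the same route as the paper's proof: invoke the affine invariance of the visibility criterion to conclude the degree sequence, hence the weights, are unchanged, then expand the weighted sum and apply $\sum_{i=1}^n w_i = 1$. The paper's proof consists of exactly those two steps, with the invariance simply quoted from the preliminaries rather than verified.

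Where you go beyond the paper is the sign analysis of $r$, and this is a genuine correction rather than a quibble. The paper asserts the theorem for all $r,t \in \mathbb{R}$, and its proof silently assumes that dividing the transformed visibility inequality by $r$ preserves its direction, which is valid only for $r > 0$. For $r < 0$ the criterion reverses and the theorem actually fails. A concrete counterexample: the series $(1,0,1)$ has a complete visibility graph (the middle bar does not block the outer two), so every weight is $1/3$ and $F(1,0,1) = 2/3$; taking $r=-1$, $t=2$ gives the transformed series $(1,2,1)$, in which the outer vertices can no longer see each other, so the weights become $(1/4, 1/2, 1/4)$ and $F(1,2,1) = 3/2$, whereas $r \cdot F(1,0,1) + t = 4/3$. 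Your restriction to $r>0$, with $r=0$ dispatched via idempotency, is precisely the hypothesis needed to make the statement true, so flagging it explicitly strengthens the result as compared with what the paper actually establishes.
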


\begin{proof}
As mentioned in the preliminaries, the visibility criterion is invariant under affine transformations, hence, the weight distribution of time series ${a_1},{a_2}, \cdots ,{a_n}$ is same as the weight distribution of time series $r{a_1} + t,r{a_2} + t, \cdots ,r{a_n} + t$. We have:
\[\begin{array}{l}
 F(r{a_1} + t,r{a_2} + t, \cdots ,r{a_n} + t) = {w_1}(r{a_1} + t) + {w_2}(r{a_2} + t) +  \cdots  + {w_n}(r{a_n} + t) \\
 {\kern 1pt} {\kern 1pt} {\kern 1pt} {\kern 1pt} {\kern 1pt} {\kern 1pt} {\kern 1pt} {\kern 1pt} {\kern 1pt} {\kern 1pt} {\kern 1pt} {\kern 1pt} {\kern 1pt} {\kern 1pt} {\kern 1pt} {\kern 1pt} {\kern 1pt} {\kern 1pt} {\kern 1pt} {\kern 1pt} {\kern 1pt} {\kern 1pt} {\kern 1pt} {\kern 1pt} {\kern 1pt} {\kern 1pt} {\kern 1pt} {\kern 1pt} {\kern 1pt} {\kern 1pt} {\kern 1pt} {\kern 1pt} {\kern 1pt} {\kern 1pt} {\kern 1pt} {\kern 1pt} {\kern 1pt} {\kern 1pt} {\kern 1pt} {\kern 1pt} {\kern 1pt} {\kern 1pt} {\kern 1pt} {\kern 1pt} {\kern 1pt} {\kern 1pt} {\kern 1pt} {\kern 1pt} {\kern 1pt} {\kern 1pt} {\kern 1pt} {\kern 1pt} {\kern 1pt} {\kern 1pt} {\kern 1pt} {\kern 1pt} {\kern 1pt} {\kern 1pt} {\kern 1pt} {\kern 1pt} {\kern 1pt} {\kern 1pt} {\kern 1pt} {\kern 1pt} {\kern 1pt} {\kern 1pt} {\kern 1pt} {\kern 1pt} {\kern 1pt} {\kern 1pt} {\kern 1pt} {\kern 1pt} {\kern 1pt} {\kern 1pt} {\kern 1pt} {\kern 1pt} {\kern 1pt} {\kern 1pt} {\kern 1pt} {\kern 1pt} {\kern 1pt} {\kern 1pt} {\kern 1pt} {\kern 1pt} {\kern 1pt} {\kern 1pt} {\kern 1pt} {\kern 1pt} {\kern 1pt} {\kern 1pt} {\kern 1pt} {\kern 1pt} {\kern 1pt} {\kern 1pt} {\kern 1pt} {\kern 1pt} {\kern 1pt} {\kern 1pt} {\kern 1pt} {\kern 1pt} {\kern 1pt} {\kern 1pt} {\kern 1pt} {\kern 1pt} {\kern 1pt} {\kern 1pt} {\kern 1pt} {\kern 1pt} {\kern 1pt} {\kern 1pt} {\kern 1pt} {\kern 1pt} {\kern 1pt} {\kern 1pt} {\kern 1pt} {\kern 1pt} {\kern 1pt} {\kern 1pt} {\kern 1pt} {\kern 1pt} {\kern 1pt} {\kern 1pt} {\kern 1pt} {\kern 1pt} {\kern 1pt} {\kern 1pt} {\kern 1pt} {\kern 1pt} {\kern 1pt} {\kern 1pt} {\kern 1pt} {\kern 1pt} {\kern 1pt} {\kern 1pt}  = {w_1}r{a_1} + {w_1}t + {w_2}r{a_2} + {w_2}t +  \cdots  + {w_n}r{a_n} + {w_n}t \\
 {\kern 1pt} {\kern 1pt} {\kern 1pt} {\kern 1pt} {\kern 1pt} {\kern 1pt} {\kern 1pt} {\kern 1pt} {\kern 1pt} {\kern 1pt} {\kern 1pt} {\kern 1pt} {\kern 1pt} {\kern 1pt} {\kern 1pt} {\kern 1pt} {\kern 1pt} {\kern 1pt} {\kern 1pt} {\kern 1pt} {\kern 1pt} {\kern 1pt} {\kern 1pt} {\kern 1pt} {\kern 1pt} {\kern 1pt} {\kern 1pt} {\kern 1pt} {\kern 1pt} {\kern 1pt} {\kern 1pt} {\kern 1pt} {\kern 1pt} {\kern 1pt} {\kern 1pt} {\kern 1pt} {\kern 1pt} {\kern 1pt} {\kern 1pt} {\kern 1pt} {\kern 1pt} {\kern 1pt} {\kern 1pt} {\kern 1pt} {\kern 1pt} {\kern 1pt} {\kern 1pt} {\kern 1pt} {\kern 1pt} {\kern 1pt} {\kern 1pt} {\kern 1pt} {\kern 1pt} {\kern 1pt} {\kern 1pt} {\kern 1pt} {\kern 1pt} {\kern 1pt} {\kern 1pt} {\kern 1pt} {\kern 1pt} {\kern 1pt} {\kern 1pt} {\kern 1pt} {\kern 1pt} {\kern 1pt} {\kern 1pt} {\kern 1pt} {\kern 1pt} {\kern 1pt} {\kern 1pt} {\kern 1pt} {\kern 1pt} {\kern 1pt} {\kern 1pt} {\kern 1pt} {\kern 1pt} {\kern 1pt} {\kern 1pt} {\kern 1pt} {\kern 1pt} {\kern 1pt} {\kern 1pt} {\kern 1pt} {\kern 1pt} {\kern 1pt} {\kern 1pt} {\kern 1pt} {\kern 1pt} {\kern 1pt} {\kern 1pt} {\kern 1pt} {\kern 1pt} {\kern 1pt} {\kern 1pt} {\kern 1pt} {\kern 1pt} {\kern 1pt} {\kern 1pt} {\kern 1pt} {\kern 1pt} {\kern 1pt} {\kern 1pt} {\kern 1pt} {\kern 1pt} {\kern 1pt} {\kern 1pt} {\kern 1pt} {\kern 1pt} {\kern 1pt} {\kern 1pt} {\kern 1pt} {\kern 1pt} {\kern 1pt} {\kern 1pt} {\kern 1pt} {\kern 1pt} {\kern 1pt} {\kern 1pt} {\kern 1pt} {\kern 1pt} {\kern 1pt} {\kern 1pt} {\kern 1pt} {\kern 1pt} {\kern 1pt} {\kern 1pt} {\kern 1pt} {\kern 1pt} {\kern 1pt} {\kern 1pt} {\kern 1pt} {\kern 1pt} {\kern 1pt}  = r({w_1}{a_1} + {w_2}{a_2} +  \cdots {w_n}{a_n}) + t({w_1} + {w_2} +  \cdots  + {w_n}) \\
 {\kern 1pt} {\kern 1pt} {\kern 1pt} {\kern 1pt} {\kern 1pt} {\kern 1pt} {\kern 1pt} {\kern 1pt} {\kern 1pt} {\kern 1pt} {\kern 1pt} {\kern 1pt} {\kern 1pt} {\kern 1pt} {\kern 1pt} {\kern 1pt} {\kern 1pt} {\kern 1pt} {\kern 1pt} {\kern 1pt} {\kern 1pt} {\kern 1pt} {\kern 1pt} {\kern 1pt} {\kern 1pt} {\kern 1pt} {\kern 1pt} {\kern 1pt} {\kern 1pt} {\kern 1pt} {\kern 1pt} {\kern 1pt} {\kern 1pt} {\kern 1pt} {\kern 1pt} {\kern 1pt} {\kern 1pt} {\kern 1pt} {\kern 1pt} {\kern 1pt} {\kern 1pt} {\kern 1pt} {\kern 1pt} {\kern 1pt} {\kern 1pt} {\kern 1pt} {\kern 1pt} {\kern 1pt} {\kern 1pt} {\kern 1pt} {\kern 1pt} {\kern 1pt} {\kern 1pt} {\kern 1pt} {\kern 1pt} {\kern 1pt} {\kern 1pt} {\kern 1pt} {\kern 1pt} {\kern 1pt} {\kern 1pt} {\kern 1pt} {\kern 1pt} {\kern 1pt} {\kern 1pt} {\kern 1pt} {\kern 1pt} {\kern 1pt} {\kern 1pt} {\kern 1pt} {\kern 1pt} {\kern 1pt} {\kern 1pt} {\kern 1pt} {\kern 1pt} {\kern 1pt} {\kern 1pt} {\kern 1pt} {\kern 1pt} {\kern 1pt} {\kern 1pt} {\kern 1pt} {\kern 1pt} {\kern 1pt} {\kern 1pt} {\kern 1pt} {\kern 1pt} {\kern 1pt} {\kern 1pt} {\kern 1pt} {\kern 1pt} {\kern 1pt} {\kern 1pt} {\kern 1pt} {\kern 1pt} {\kern 1pt} {\kern 1pt} {\kern 1pt} {\kern 1pt} {\kern 1pt} {\kern 1pt} {\kern 1pt} {\kern 1pt} {\kern 1pt} {\kern 1pt} {\kern 1pt} {\kern 1pt} {\kern 1pt} {\kern 1pt} {\kern 1pt} {\kern 1pt} {\kern 1pt} {\kern 1pt} {\kern 1pt} {\kern 1pt} {\kern 1pt} {\kern 1pt} {\kern 1pt} {\kern 1pt} {\kern 1pt} {\kern 1pt} {\kern 1pt} {\kern 1pt} {\kern 1pt} {\kern 1pt} {\kern 1pt} {\kern 1pt} {\kern 1pt} {\kern 1pt} {\kern 1pt} {\kern 1pt} {\kern 1pt} {\kern 1pt} {\kern 1pt}  = r \cdot F({a_1},{a_2}, \cdots ,{a_n}) + t \\
 \end{array}\]

\end{proof}

\subsection{Time property}

Because the visibility graph averaging (VGA) operator decides the weights according to the degree distribution of the associated graph derived from the visibility algorithm, and the visibility graph conserves the structure of the time series, certainly the VGA will conserve the structure of the time series likewise.

It is obvious that if a series is periodic, the weights of data in this series will be periodic except for weights of the first period and the last period. The Figure \ref{periodic} shows the weights of a periodic series $({\rm{4 ,3, 2, 5, 1, }} \cdots {\rm{4, 3, 2, 5, 1}})$. If a series is random, the weights will be random. If the series is fractal, the weights will be scale-free. The Figure \ref{fractal} shows the weights distribution of the Conway series.

\begin{figure}[!t]
\centering
\includegraphics[scale=0.7]{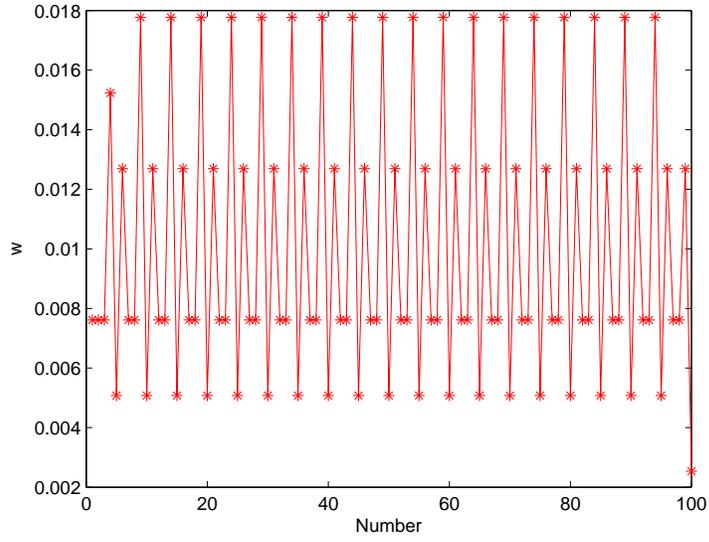}
\caption{The weights of a periodic series}
\label{periodic}
\end{figure}

\begin{figure}[!t]
\centering
\includegraphics[scale=0.7]{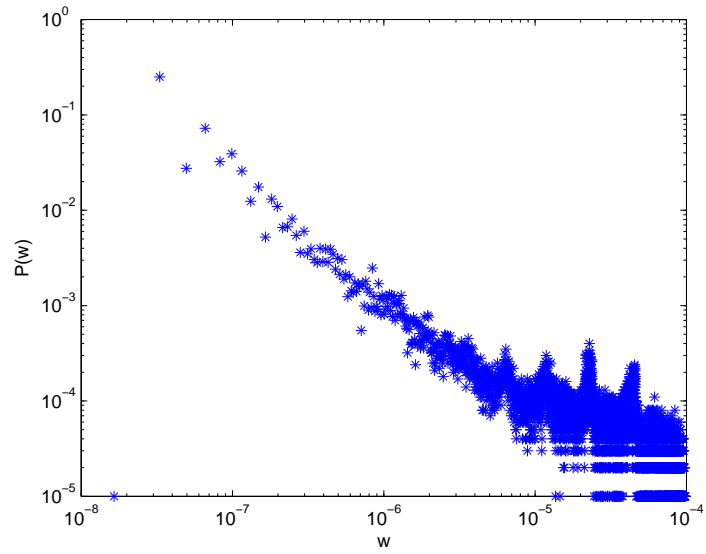}
\caption{The weights of a fractal series}
\label{fractal}
\end{figure}

Here we consider some special cases to show its special time properties. Suppose that there is a periodic series with one special big value which is like the Figure \ref{property1} showing. In this figure, if a data value is bigger than others, it will be more important (In the VGA operator, its weight will be big), because it can see more other bars in the histogram, but with time passing, the number of bars that it can see will get smaller (In the VGA operator, its weight will get smaller). In the figure, the periodic series on the left and right sides of the special big value are divided into 4 groups. In the group 1, in total $8$ values can be seen from the top of the special one. In the group 2, there are 6 values being seen. In the group 3, there are 4 values, but in the group, there are just 2 values. In other words, time will decrease the influence of a bigger value. This property reflects the reality. In the real world, some important events will influence other events, but with time passing, this influence will be decreased (In the VGA operator, the weights will be decreased), and the number of the events which are influenced by the important event will be decreased too.

Moreover, if one argument $a$ is far away from the considered one argument $b$, the probability that $a$ can be seen from the top of $b$ will decrease, because with other data appearing between them, these data may block the sight. With the time passing, the distance between $a$ and $b$ will get bigger and bigger, then more and more other data will block between them, so the probability that they can see each other will get smaller. It means that time will decrease the probability of the connection between data. This situation is shown in the figure \ref{property2}.

In addition, if an argument with big value lies in between two arguments which are just a little smaller than it, it can not connect many other arguments and its weight will not be great. It illustrates that at what time the argument with big value appears in the histogram is important. All of the time properties will influence the weights distribution in the VGA operator.

\begin{figure}[!t]
\centering
\includegraphics[scale=0.5]{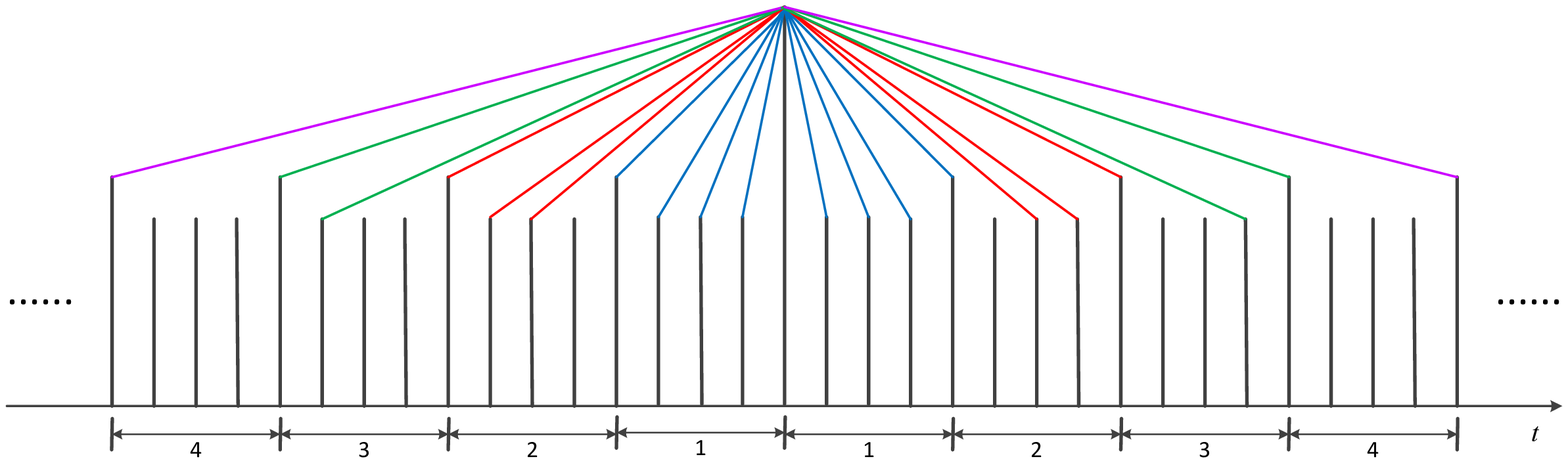}
\caption{The time property of VGA operator}
\label{property1}
\end{figure}

\begin{figure}[!t]
\centering
\includegraphics[scale=0.5]{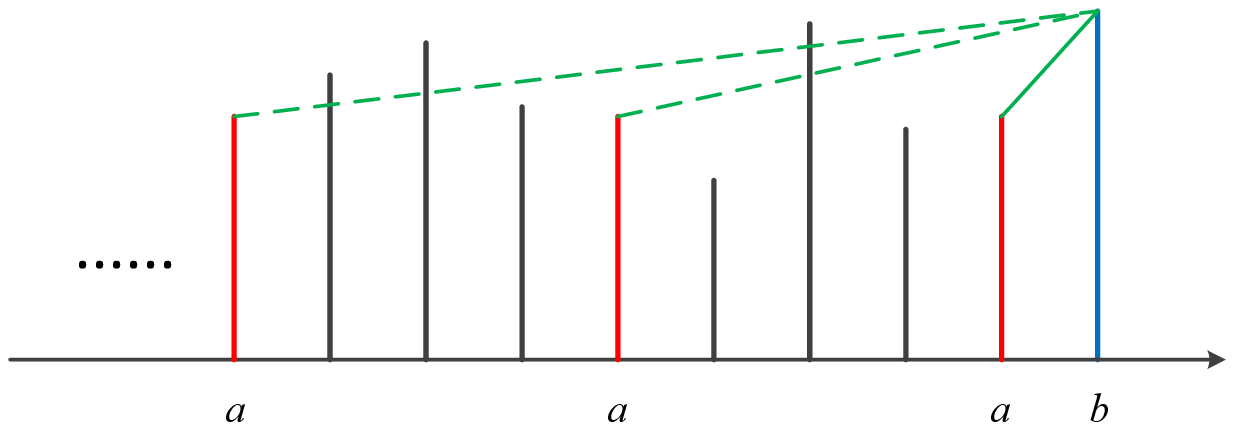}
\caption{The time property of VGA operator}
\label{property2}
\end{figure}

\begin{figure}[!t]
\centering
\includegraphics[scale=0.5]{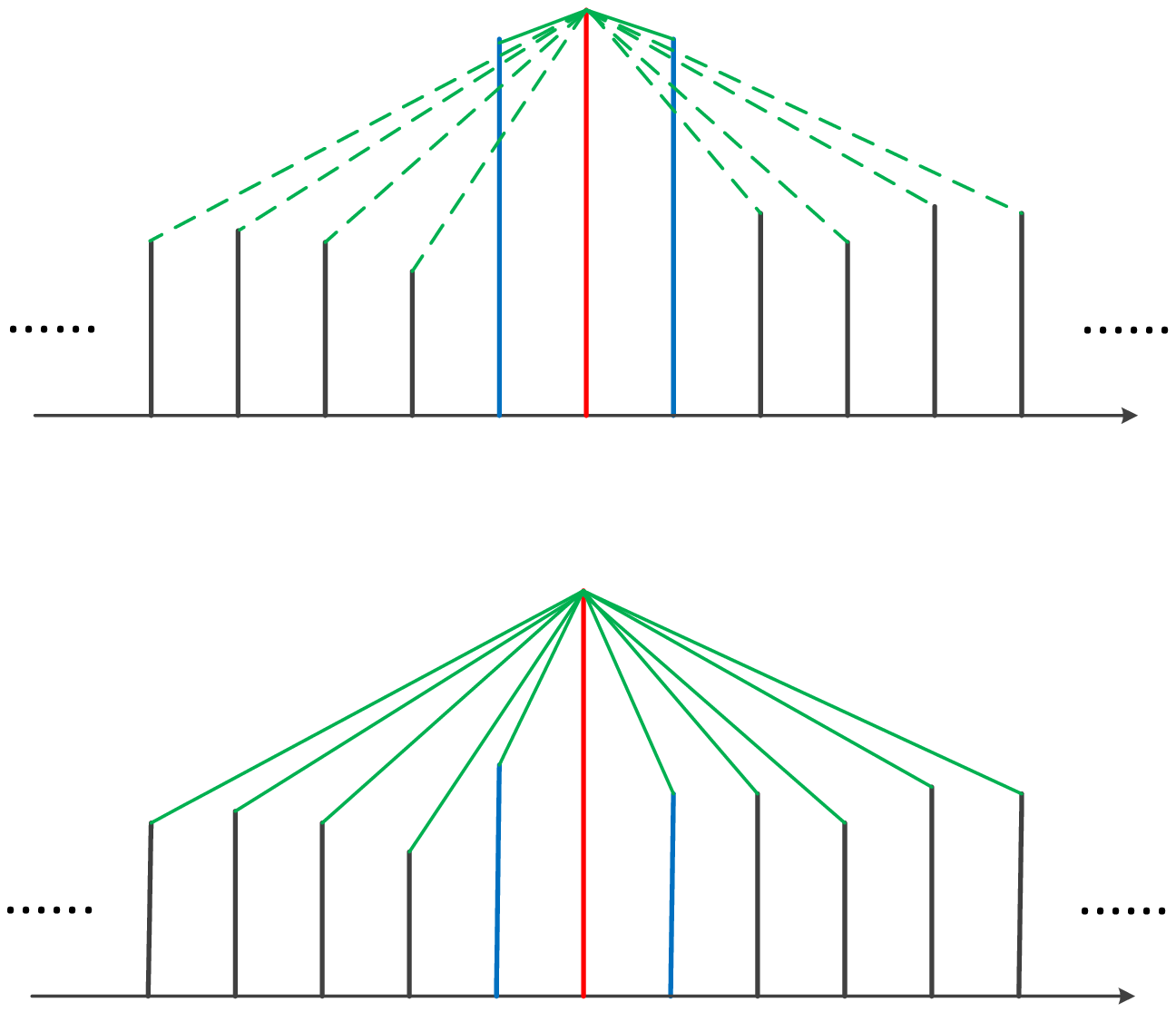}
\caption{The time property of VGA operator}
\label{property2}
\end{figure}

\section{Case Study}
\label{Case}
In this section, a case is used to illustrate the feasibility and the efficiency of the VGA operator. The VGA operator will be used in the analysis of the Taiwan Stock Exchange Capitalization Weighted Stock Index (TAIEX) database. In economic, forecasting can provide a superior investment strategy for investors. The problem is that it is difficult to handle high-order stock data, because it is hard to determine an appropriate weight to each period of past stock price. VGA operator will solve this problem conveniently.

The Taiwan Stock Exchange Capitalization Weighted Stock Index (TAIEX) of the year 2000-2012 is chosen as the forecasting basic data. The data of December from 2000 to 2012 which are listed in the Table \ref{basicdata} will be aggregated by VGA operator and OWA operators, respectively.

\begin{landscape}
\begin{table}[h]
\tiny
    \caption{The experimental data}
    \label{basicdata}
    \begin{center}
    \begin{tabular}{cccccccccccccc} \hline
     & $2000$ & $2001$ & $2002$ & $2003$ & $2004$ & $2005$ & $2006$ & $2007$ & $2008$ & $2009$ & $2010$ & $2011$ & $2012$  \\ \hline
    $12.1$ &5,342.06 & & &5,870.17 &5,798.62 & 6,179.82&7,613.57 & & 4,518.43& 7,649.23&8,520.11 & 7,178.69& \\
    $12.2$ &5,277.35 &  & 4,683.18 & 5,911.45 & 5,867.95 &6,228.95 &  &  & 4,356.98 &7,677.62 &8,585.77 &7,140.68 & \\
    $12.3$ & &4,646.61 & 4,793.93&5,884.97 &5,893.27 & & &8,583.84 & 4,307.26&7,684.67 &8,624.01 & & 7,599.91 \\
    $12.4$ &5,174.02 &4,766.43 &4,727.49 &5,920.46 & & &7,647.01 & 8,651.28& 4,254.96& 7,650.91& & &7,600.98 \\
    $12.5$ &5,199.20 & 4,924.56&4,755.40 &5,900.05 & & 6,348.31& 7,609.90& 8,676.95&4,225.07 & & & 7,098.08& 7,649.05\\
    $12.6$ &5,170.62 &5,208.86 &4,738.98 & &5,919.17 & 6,350.52&7,693.33 &8,694.41 & & &8,702.23 & 6,956.28& 7,623.26\\
    $12.7$ &5,212.73 &5,333.93 & & &5,925.28 &6,329.52 &7,686.52 &8,722.38 & &7,775.64 &8,704.39 &7,033.00 & 7,642.26\\
    $12.8$ &5,252.83 & & &5,847.15 &5,892.51 & 6,249.19& 7,636.30& &4,418.33 &7,768.71 &8,703.79 & 6,982.90& \\
    $12.9$ & & & 4,823.67& 5,859.56&5,913.97 &6,264.36 & & & 4,472.66&7,797.42 & 8,753.84& 6,893.30& \\
    $12.10$ & &5,321.28 &4,755.01 &5,803.42 &5,911.63 & & & 8,598.03& 4,658.87& 7,677.91& 8,718.83& & 7,609.50\\
    $12.11$ &5,284.41 &5,273.97 &4,699.41 & & & &7,612.12 &8,638.33 & 4,655.57& 7,795.07& & &7,613.69 \\
    $12.12$ &5,380.09 &5,539.31 & 4,669.70&5,858.32 & &6,266.29 &7,458.56 & 8,490.84& 4,481.27& & & 6,949.04 & 7,690.19\\
    $12.13$ & 5,384.36& 5,407.54& 4,588.14& &5,878.89 & 6,261.18&7,450.30 & 8,187.95& & & 8,736.59&6,896.31 & 7,757.09\\
    $12.14$ &5,320.16 &5,486.73 & & &5,909.65 & 6,235.35& 7,480.41&8,118.08 & &7,819.13 &8,740.43 & 6,922.57 &7,698.77 \\
    $12.15$ &5,224.74 & & &5,924.24 &6,002.58 &6,258.47 & 7,538.82& & 4,613.72& &8,756.71 & 6,764.59& \\
    $12.16$ &5,134.10 & &4,582.05 &5,887.23 &6,019.23 &6,350.69 & & &4,616.89 & 7,751.60& 8,782.20& 6,785.09 & \\
    $12.17$ & & 5,456.15& 4,545.62& 5,752.01&6,009.32 & & & 7,830.85&4,648.02 &7,742.17 & 8,817.90& & 7,631.28\\
    $12.18$ &5,055.20 &5,329.19 &4,535.93 & 5,768.76& & & 7,624.62&7,807.39 & 4,694.81&7,753.63 & & & 7,643.74\\
    $12.19$ & 5,040.25& 5,221.96& 4,549.23& 5,759.23& & 6,431.42& 7,598.88&8,014.31 &4,694.52 & & & 6,633.33 & 7,677.47\\
    $12.20$ &4,947.89 & 5,309.10&4,595.67 & & 5,985.94&6,427.84 & 7,648.35&7,857.08 & & & 8,768.72&6,662.64 & 7,595.46\\
    $12.21$ &4,817.22 & 5,109.24& & & 5,987.85&6,471.89 & 7,620.94&7,941.44 & & 7,787.27& 8,827.79& 6,966.48 &7,519.93\\
    $12.22$ &4,811.22 & & & 5,835.11& 6,001.52& 6,417.20& 7,652.47& &4,535.54 &7,856.00 & 8,860.49& 6,966.35 &7,540.14\\
    $12.23$ & & &4,572.77 &5,845.51 &5,997.67 &6,512.63 & & & 4,405.86&7,901.50 & 8,898.87& 7,110.73&\\
    $12.24$ & &5,164.73 &4,544.50 & 5,857.87& 6,019.42& & & 8,135.48& 4,423.09& 7,963.54& 8,861.10& &7,535.52\\
    $12.25$ & & 5,372.81& 4,484.43& 5,853.70& & & 7,646.81& 8,167.07& 4,413.45& 7,972.59& & &7,636.57\\
    $12.26$ & 4,721.36& 5,392.43&4,567.37 & 5,857.21& & 6,534.77&7,727.59 &8,156.39 &4,425.08 & & & 7,092.58 &7,634.19\\
    $12.27$ &4,614.63 & 5,332.98& 4,547.32& & 5,985.94&6,531.59 & 7,733.18& 8,313.72& & & 8,892.31& 7,085.03 &7,648.41\\
    $12.28$ & 4,797.14& 5,398.28& & &6,000.57 & 6,524.40& 7,732.93&8,396.95 & & 8,057.49& 8,870.76& 7,056.67 &7,699.50\\
    $12.29$ &4,743.94 & & & 5,804.89& 6,088.49& 6,575.53& 7,823.72& & 4,416.16& 8,053.83& 8,866.35& &\\
    $12.30$ & 4,739.09& & 4,457.75& 5,866.75& 6,100.86& 6,548.34& & & 4,589.04& 8,112.28& 8,907.91& 7,072.08 &\\
    $12.31$ & & 5,551.24& 4,452.45& 5,890.69& 6,139.69& & & 8,506.28& 4,591.22& 8,188.11& 8,972.50& &\\
     \hline
    \end{tabular}
    \end{center}
\end{table}
\end{landscape}

\subsection{Aggregating with OWA operators}

In this paper, the weights of the OWA operators can be identified by using Fuller and Majlender's approach. Fuller and Majlender transform Yager's OWA equation by using Lagrange multipliers. The main calculation process is as follows:

(1)If $n=2$, then ${w_1} = \alpha ,{w_2} = 1 - \alpha $.

(2)If $\alpha  = 0$ or $\alpha  = 1$, then $w = {(0,0, \cdots ,1)^T}$ or $w = {(1,0, \cdots ,0)^T}$, respectively.

(3)If $n \ge 3$ and $0 < \alpha  < 1$ then
\begin{equation}
{w_j} = \sqrt[{n - 1}]{{w_1^{n - j}w_n^{j - 1}}}
\end{equation}

\begin{equation}
{w_n} = \frac{{((n - 1)\alpha  - n){w_1} + 1}}{{(n - 1)\alpha  + 1 - n{w_1}}}
\end{equation}

\begin{equation}
{w_1}{[(n - 1)\alpha  + 1 - n{w_1}]^n} = {((n - 1)\alpha )^{n - 1}}[((n - 1)\alpha  - n){w_1} + 1]
\end{equation}
Where $\alpha $ characterizes the degree to which the aggregation is like an \emph{or} operation.

In the Table \ref{basicdata}, some data do not exist. For example, to December 1, the data of the year 2001, 2002, 2007 and 2012 do not exist, so the number of data which need to be aggregated is 9. However, to December 5, the number of data which need to be aggregated is 10. Hence, in the OWA operator, the parameter $n$ is different. In this case, when $n=8$, $n=9$ and $n=10$, the corresponding weights need to be computed which are shown in the Table \ref{n8}, Table \ref{n9} and Table \ref{n10}.

\begin{table}[htbp]
    \caption{OWA operators' weights when n=8}
    \label{n8}
    \begin{center}
    \begin{tabular}{ccccccccc} \hline
     & $w_1$ & $w_2$ & $w_3$ & $w_4$ & $w_5$ & $w_6$ & $w_7$ & $w_8$ \\ \hline
    $\alpha {\rm{ = }}0.1$ &0.0012  &0.0030 &0.0071 &0.0173 &0.0417 &0.1006 &0.2421 &0.5864  \\
    $\alpha {\rm{ = }}0.5$ &0.1250  &0.1250 &0.1250 &0.1250 &0.1250 &0.1250 &0.1250 &0.1250  \\
    $\alpha {\rm{ = }}0.6$ & 0.1917 & 0.1674 & 0.1461 & 0.1275 & 0.1113 &0.0972 &0.0848& 0.0740 \\
%    $\alpha {\rm{ = }}0.7$ & 0.2792 & 0.2089 & 0.1563 & 0.1169&0.0875&0.0655&0.0490&0.0366  \\
%    $\alpha {\rm{ = }}0.8$ & 0.3992 &0.2428  &0.1477  &0.0898&0.0547&0.0332&0.0202&0.0123  \\
    $\alpha {\rm{ = }}0.9$ &0.5864  &0.2421  &0.1006  &0.0417&0.0173&0.0071&0.0030&0.0012  \\\hline
    \end{tabular}
    \end{center}
\end{table}

\begin{table}
\small
    \caption{OWA operators' weights when n=9}
    \label{n9}
    \begin{center}
    \begin{tabular}{cccccccccc} \hline
     & $w_1$ & $w_2$ & $w_3$ & $w_4$ & $w_5$ & $w_6$ & $w_7$ & $w_8$ & $w_9$ \\ \hline
     $\alpha {\rm{ = }}0.1$ &0.0009  &0.0020 &0.0044 &0.0098 &0.0220 &0.0493 &0.1104 &0.2473&0.5540  \\
     $\alpha {\rm{ = }}0.5$ &0.1111  &0.1111 &0.1111 &0.1111 &0.1111 &0.1111 &0.1111 &0.1111&0.1111  \\
    $\alpha {\rm{ = }}0.6$ & 0.1726 & 0.1527 & 0.1350 & 0.1195 & 0.1057 &0.0936 &0.0828& 0.0732& 0.0648 \\
%    $\alpha {\rm{ = }}0.7$ & 0.2544 & 0.1958 & 0.1507 & 0.1160&0.0893&0.0687&0.0529&0.0407&0.0314  \\
%    $\alpha {\rm{ = }}0.8$ & 0.3688 &0.2351  &0.1499  &0.0956&0.0609&0.0389&0.0248&0.0158&0.0101  \\
    $\alpha {\rm{ = }}0.9$ &0.5540 &0.2473  &0.1104  &0.0493&0.0220&0.0098&0.0044&0.0020&0.0009  \\\hline
    \end{tabular}
    \end{center}
\end{table}

\begin{table}
\small
    \caption{OWA oprators' weights when n=10}
    \label{n10}
    \begin{center}
    \begin{tabular}{p{0.9cm}p{0.9cm}p{0.9cm}p{0.9cm}p{0.9cm}p{0.9cm}p{0.9cm}p{0.9cm}p{0.9cm}p{0.9cm}p{0.9cm}} \hline
     & $w_1$ & $w_2$ & $w_3$ & $w_4$ & $w_5$ & $w_6$ & $w_7$ & $w_8$ & $w_9$& $w_{10}$\\ \hline
     $\alpha {\rm{ = }}0.1$ &0.0007  &0.0014  &0.0029  &0.0061&0.0127&0.0268&0.0564&0.1186&0.2495&0.5250  \\
     $\alpha {\rm{ = }}0.5$ &0.1000  &0.1000  &0.1000  &0.1000&0.1000&0.1000&0.1000&0.1000&0.1000&0.1000  \\
    $\alpha {\rm{ = }}0.6$ & 0.1569 & 0.1404 & 0.1256 & 0.1123 & 0.1005 &0.0899 &0.0804& 0.0720&0.0644&0.0576 \\
    %$\alpha {\rm{ = }}0.7$ & 0.2336 & 0.1840 & 0.1450 & 0.1143&0.0901&0.0710&0.0559&0.0441&0.0347&0.0274  \\
%    $\alpha {\rm{ = }}0.8$ & 0.3427 &0.2272  &0.1506  &0.0998&0.0662&0.0439&0.0291&0.0193&0.0128&0.0085  \\
    $\alpha {\rm{ = }}0.9$ &0.5250  &0.2495  &0.1186  &0.0564&0.0268&0.0127&0.0061&0.0029&0.0014&0.0007  \\\hline
    \end{tabular}
    \end{center}
\end{table}

The thirteen periods of stock prices, $P(2000),P(2001), \cdots ,P(2012)$ will be aggregated. According to the OWA operators, the aggregated value is computed as follows:
\begin{equation}
Agg = {w_1}{P_1} + {w_2}{P_2} +  \cdots  + {w_n}{P_n}
\end{equation}
Where $P_i$ is the $ith$ largest element of the price.
The aggregated results are listed in the Table \ref{OWAagg}.

\begin{table}[htbp]
\tiny
    \caption{Aggregated results obtained by OWA operators}
    \label{OWAagg}

    \begin{center}
    \begin{tabular}{cccccc} \hline
    Time & $\alpha {\rm{ = }}0.1$ & $\alpha {\rm{ = }}0.5$ & $\alpha {\rm{ = }}0.6$ & $\alpha {\rm{ = }}0.9$ \\ \hline
    $12.1$ &5016.64&6518.31&6890.91&8039.99 \\
    $12.2$ &4689.42&6191.60&6598.81&7974.68\\
    $12.3$ &4617.58&6445.85&6934.81&8345.14\\
    $12.4$ &4569.12&6265.32&6728.68&8119.05\\
    $12.5$ &4602.82&6238.66&6678.87&8067.38 \\
    $12.6$ &5075.26&6705.77&7135.36&8409.16 \\
    $12.7$ &5500.49&7036.57&7410.04&8467.75\\
    $12.8$ &4945.69&6527.31&6920.95&8165.84 \\
    $12.9$ &4817.88&6347.35&6757.01&8139.45\\
    $12.10$ &4893.96&6560.95&7022.27&8403.36 \\
    $12.11$ &4838.17&6446.57&6887.57&8184.62 \\
    $12.12$ &4785.57&6278.36&6677.09&7954.31 \\
    $12.13$ &5065.98&6654.83&7060.31&8296.42\\
    $12.14$ &5582.23&6973.13&7318.53&8330.72 \\
    $12.15$ &5006.11&6385.48&6755.32&8068.70 \\
    $12.16$ &4785.18&6211.50&6621.63&8070.54 \\
    $12.17$ &4818.32&6491.94&6935.00&8278.61\\
    $12.18$ &4752.56&6245.29&6639.31&7677.38 \\
    $12.19$ &4766.81&6162.06&6545.92&7700.12 \\
    $12.20$ &4957.02&6579.87&6986.89&8221.64 \\
    $12.21$ &5247.54&6905.01&7273.98&8326.75 \\
    $12.22$ &4957.93&6647.60&7051.87&8276.31\\
    $12.23$ &4726.36&6405.69&6847.39&8282.43 \\
    $12.24$ &4698.79&6499.93&6978.75&8397.26\\
    $12.25$ &4678.45&6443.43&6871.73&7965.21 \\
    $12.26$ &4648.71&6210.90&6625.86&7821.17\\
    $12.27$ &4860.76&6668.51&7116.24&8425.71\\
    $12.28$ &5320.42&7053.47&7439.94&8508.03 \\
    $12.29$ &4780.11&6546.61&7000.28&8376.40 \\
    $12.30$ &4670.64&6265.38&6724.67&8270.49 \\
    $12.31$ &4727.19&6536.52&7037.10&8561.03\\\hline
    \end{tabular}
    \end{center}
\end{table}

\subsection{Aggregating with VGA operator}
According to the visibility graph and Eq.\ref{weight}, the weights of the VGA operator can be obtained which are shown in the Table \ref{VGAweights}. The aggregated results are shown in the Table \ref{VGAagg}.

\begin{landscape}
\begin{table}[h]
\tiny
    \caption{The weights obtained by VGA operator}
    \label{VGAweights}
    \begin{center}
    \begin{tabular}{cccccccccccccc} \hline
     & $2000$ & $2001$ & $2002$ & $2003$ & $2004$ & $2005$ & $2006$ & $2007$ & $2008$ & $2009$ & $2010$ & $2011$ & $2012$  \\ \hline
    $12.1$ & 0.0714&&&0.1429&0.1071&0.1071&0.2500&&0.0714&0.1071&0.1071&0.0357 \\
    $12.2$ &0.1111  &   &  0.0556   & 0.1667  &  0.1111 &   0.1389    &           &     &  0.0556  &  0.1667 &   0.1667  &  0.0278 \\
    $12.3$ &  &  0.1154 &   0.0769  &  0.1538  &  0.0769   &         &        &  0.2308  &  0.0769  &  0.1154  &  0.1154   &       &  0.0385 \\
    $12.4$ &0.1471  &  0.1176 &   0.0882 &   0.1471    &        &        &  0.0882   & 0.2059  &  0.0588 &   0.0882   &         &       &   0.0588 \\
    $12.5$ &0.1250  &  0.1250  &  0.0750  &  0.1500     &      &  0.0500  &  0.1250  &  0.1750  &  0.0500     &         &       & 0.0750  &  0.0500\\
    $12.6$ &0.1136  &  0.1364  &  0.0909     &      &  0.1364   & 0.0909   & 0.1364  &  0.1364     &       &       &   0.0682   & 0.0455 &   0.0455\\
    $12.7$ &0.1250  &  0.1250   &          &       &  0.1250  &  0.1000  &  0.1250 &   0.1500    &       &  0.0500 &   0.1000  &  0.0500 &   0.0500\\
    $12.8$ &0.1000   &          &       &  0.1333 &   0.1000  &  0.1333  &  0.2333    &       &  0.0667  &  0.1000  &  0.1000  &  0.0333   &       \\
    $12.9$ &    &       &  0.0357  &  0.1786  &  0.1429 &   0.1786  &            &      &  0.0714  &  0.1786  &  0.1786  &  0.0357    &      \\
    $12.10$ & &  0.1154  &  0.0769 &   0.1538  &  0.0769   &          &       &  0.2308  &  0.0769  &  0.1154  &  0.1154   &        &  0.0385\\
    $12.11$ &0.1071  &  0.1429  &  0.1071    &           &         &   &   0.1429  &  0.2500 &   0.0714  &  0.1071    &       &       &   0.0714 \\
    $12.12$ &0.1190  &  0.1429   & 0.0476  &  0.1429   &         & 0.0952  &  0.1190   & 0.1667  &  0.0476    &        &        &  0.0714  &  0.0476\\
    $12.13$ &0.1190  &  0.1429  &  0.0952     &       & 0.1190   & 0.0952 &   0.1429 &   0.1190     &         &      & 0.0714 &   0.0476  &  0.0476\\
    $12.14$ &0.1111 &   0.1389      &        &     &  0.0833  &  0.1111  &  0.1389  &  0.1389     &       & 0.0556  &  0.1111  &  0.0556  &  0.0556 \\
    $12.15$ &0.0833   &           &     &  0.1667  &  0.1250  &  0.1250  &  0.2500    &       &  0.0833     &      &  0.1250  &  0.0417     &     \\
    $12.16$  &0.1111    &        & 0.0556  &  0.1667  &  0.1111 &   0.1389  &             &    &  0.0556  &  0.1667  &  0.1667 &   0.0278   &      \\
    $12.17$  & &   0.1429  &  0.0714  &  0.1429   & 0.1071      &        &      &  0.2143  &  0.0714  &  0.1071   & 0.1071   &        &  0.0357\\
    $12.18$ &0.0833   & 0.1667  &  0.0833  &  0.1250    &         &        & 0.1667  &  0.1250  &  0.0833  &  0.1250   &         &       &  0.0417\\
    $12.19$ & 0.1053  &  0.1316  &  0.0526 &   0.1316   &        &  0.1053 &   0.1316 &   0.1053   & 0.0789   &        &       &  0.0789  &  0.0789\\
    $12.20$ &0.0588   & 0.1471  &  0.0882   &        & 0.1176   & 0.0882  &  0.2059  &  0.0588       &       &     &   0.1176  &  0.0588  &  0.0588\\
    $12.21$ &0.1176   & 0.1176    &          &    &  0.1176  &  0.1176  &  0.1471 &   0.0882    &      &  0.0588  &  0.1176  &  0.0588  &  0.0588\\
    $12.22$ &0.0625    &         &    &   0.1250  &  0.0938   & 0.0938  &  0.2188    &       &  0.0625  &  0.0938   & 0.1250   & 0.0625  &  0.0625\\
    $12.23$ &      &     &  0.0417  &  0.2083  &  0.0833  &  0.2083   &           &       & 0.0833  &  0.1667 &   0.1667&    0.0417   &      \\
    $12.24$ &   &  0.1154 &   0.0769 &   0.1538  &  0.0769   &          &     &  0.2308   & 0.0769  &  0.1154   & 0.1154   &       &  0.0385\\
    $12.25$ &   & 0.1818  &  0.0909  &  0.1364    &           &     &  0.1364  &  0.1818 &   0.0909  &  0.1364    &         &       &  0.0455\\
    $12.26$ & 0.0333   & 0.1667  &  0.0667  &  0.1333    &      &  0.1000  &  0.1333  &  0.1333  &  0.0667    &        &      &  0.1000  &  0.0667\\
    $12.27$ &0.0313  &  0.1875   & 0.0938   &         & 0.1250  &  0.0938  &  0.1563  &  0.0938    &          &     &   0.0938  &  0.0625  &  0.0625\\
    $12.28$ & 0.0333   & 0.1667      &        &       & 0.1000  &  0.1000  &  0.1333  &  0.1333   &        & 0.0667   & 0.1333  &  0.0667  &  0.0667\\
    $12.29$ & 0.1071   &          &      &  0.1429  &  0.1071  &  0.1429  &  0.2500   &        &  0.0714  &  0.1071   & 0.0714   &           &    \\
    $12.30$ & 0.0938    &       &  0.0625  &  0.1875  &  0.0938   & 0.1563      &        &     &  0.0625 &   0.1250  &  0.1875  &  0.0313  &       \\
    $12.31$ &    & 0.1538  &  0.0769  &  0.1538  &  0.1154      &       &       &  0.2308 &   0.0769  &  0.1154  &  0.0769     &        &     \\
     \hline
    \end{tabular}
    \end{center}
\end{table}
\end{landscape}

\begin{table}[htbp]
    \caption{Aggregated results obtained by VGA operator}
    \label{VGAagg}

    \begin{center}
    \begin{tabular}{cc|cc} \hline
    Time & Aggregated Results & Time & Aggregated Results  \\ \hline
    $12.1$ & 6718.52 & 12.17 & 6626.59 \\
    $12.2$ & 6499.89 & 12.18 & 6334.17 \\
    $12.3$ & 6749.91 & 12.19 & 6235.76 \\
    $12.4$ & 6437.80 & 12.20 & 6655.86 \\
    $12.5$ & 6420.31 & 12.21 & 6803.89 \\
    $12.6$ & 6606.90 & 12.22 & 6902.51 \\
    $12.7$ & 6954.10 & 12.23 & 6728.45 \\
    $12.8$ & 6683.75 & 12.24 & 6758.54 \\
    $12.9$ & 6703.36 & 12.25 & 6545.96 \\
    $12.10$ & 6854.48 & 12.26 & 6426.12 \\
    $12.11$ & 6781.71 & 12.27 & 6673.25 \\
    $12.12$ & 6466.89 & 12.28 & 7166.44 \\
    $12.13$ & 6507.55 & 12.29 & 6696.84 \\
    $12.14$ & 6922.77 & 12.30 & 6610.13 \\
    $12.15$ & 6601.03 & 12.31 & 6762.35 \\
    $12.16$ & 6557.66 &  &  \\\hline
    \end{tabular}
    \end{center}
\end{table}

\subsection{Discussion}

The aggregated results derived from the OWA operators and the VGA operator are described in the Figure \ref{Agg_pic}. When set different values of $\alpha$, the different aggregated results will be obtained by OWA operators which means all of these aggregated results are possible. The aggregated results obtained by the VGA operator are neutral that lie in between the aggregated results obtained by the OWA operators when $\alpha {\rm{ = }}0.5$ and when $\alpha {\rm{ = }}0.6$, and graphically, the change tendency of the aggregated results obtained by VGA operator is mainly same as OWA operators' which illustrates VGA operator aggregating data is correct and feasible.

Different from the OWA operators, the VGA operator gets weights without other complicated calculation and the weights obtained by VGA operator are varied. Suppose that there are several time series, if the order of arguments in each time series is different, then the weights will be different. Because the weights are determined according to the degree distribution in the associated graph that means the determination of the weights are relevant with when the argument appears in the histogram. In other words, the VGA operator considers time factor when determining the weights. The VGA operator offers a convenient method to effectively aggregate time series and conserve time information.

\begin{figure}[!t]
\centering
\includegraphics[scale=0.5]{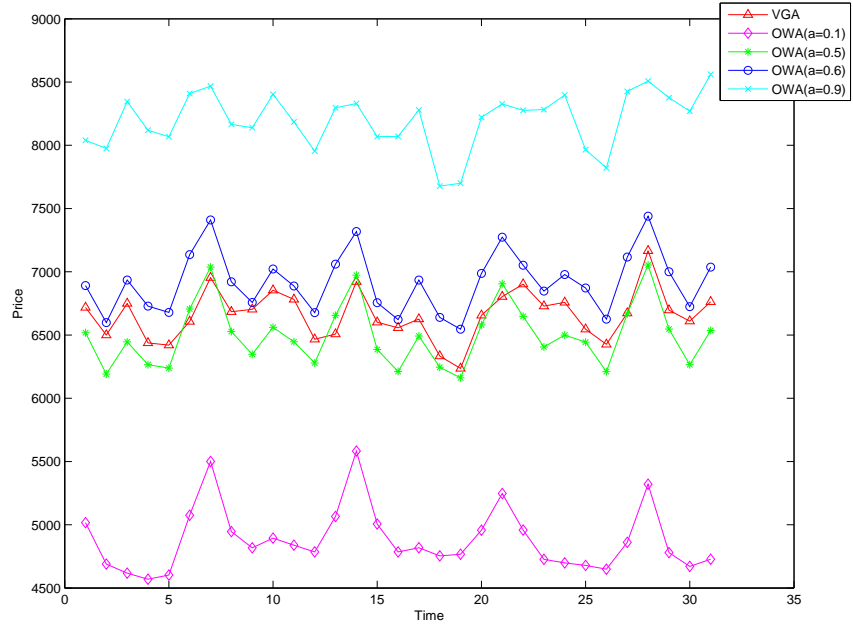}
\caption{Aggregated results}
\label{Agg_pic}
\end{figure}

\section{Conclusion}
\label{conclusion}
In this paper, a visibility graph averaging (VGA) aggregation operator is proposed. This aggregation operator converts data into the visibility graph, then decides the weights according to the degree distribution. The experimental result illustrates that the VGA operator is practical and compared with OWA operators, it shows its advantage that it does not need to calculate weights by other complicated methods and it can aggregate time series effectively, hence, we believe that it can be used in some areas such as economics, space science, weather forecast and so forth where it needs to analyze abundant data of time series.

\section{Acknowledgments}
The work is partially supported by National Natural Science Foundation of China, Grant No. 61174022, Chongqing Natural Science Foundation (for Distinguished Young Scholars), Grant No. CSCT, 2010BA2003, National High Technology Research and Development Program of China (863 Program), Grant No. 2013AA013801.

\bibliographystyle{model1-num-names}
\bibliography{GVA}

\end{document}